\def\eqref#1{equation~\ref{#1}}
\def\1{\bm{1}}
\DeclareMathAlphabet{\mathsfit}{\encodingdefault}{\sfdefault}{m}{sl}
\SetMathAlphabet{\mathsfit}{bold}{\encodingdefault}{\sfdefault}{bx}{n}
\DeclareMathOperator*{\argmin}{arg\,min}
\newtheorem{definition}{Definition}[section]
\newtheorem{proof}{Proof}
\newtheorem{theorem}{Theorem}
\newtheorem{corollary}{Corollary}
\definecolor{AKN}{rgb}{0.56, 0, 1}
\title{Vulnerability-Aware Instance Reweighting For
Adversarial  Training}
\author{\name Olukorede Fakorede* \email fakorede@iastate.edu \\
      \addr Department of Computer Science\\
      Iowa State University
      \AND
      \name Ashutosh Kumar Nirala \email aknirala@iastate.edu \\
      \addr Department of Computer Science \\
      Iowa State University
      \AND
      \name Modeste Atsague \email modeste@iastate.edu\\
      \addr Department of Computer Science \\
      Iowa State University\\
      \AND
      \name Jin Tian \email jtian@iastate.edu\\
      \addr Department of Computer Science \\
      Iowa State University\\
      }
\begin{document}

\maketitle

\begin{abstract}
Adversarial Training (AT) has been found to substantially improve the robustness of deep learning classifiers against adversarial attacks. AT involves obtaining robustness by including adversarial examples in training a classifier. Most variants of AT algorithms treat every training example equally. However, recent works have shown that better performance is achievable by treating them unequally. In addition, it has been observed that AT exerts an uneven influence on different classes in a training set and unfairly hurts examples corresponding to classes that are inherently harder to classify. Consequently, various reweighting schemes have been proposed that assign unequal weights to robust losses of individual examples in a training set. In this work, we propose a novel instance-wise reweighting scheme. It considers the vulnerability of each natural example and the resulting information loss on its adversarial counterpart occasioned by adversarial attacks. Through extensive experiments, we show that our proposed method significantly improves over existing reweighting schemes, especially against strong white and black-box attacks.

\end{abstract}

\section{Introduction}

The practical application of deep learning in safety-critical domains has been thrown into doubt following the observed brittleness of deep learning to well-crafted adversarial perturbations \citep{szegedy2013intriguing}. This chilling observation has led to an array of methods aimed at making deep learning classifiers robust to these adversarial perturbations. Prominent among these proposed methods is adversarial training \citep{goodfellow2014explaining, madry2018towards}. Adversarial training (AT) is an effective method that typically involves the introduction of adversarial examples 
in training a deep learning classifier. 
Several AT variants  have been proposed  yielding modest improvements \citep{wang2019improving,ding2019mma,kannan2018adversarial,zhang2019theoretically}. More recently, methods have been proposed to boost the performance of the existing AT variants even further, including adversarial weight perturbation \citep{wu2020adversarial}, utilizing hypersphere embedding \citep{pang2020boosting,fakorede2023improving}, and augmenting dataset with unlabeled and/or extra labeled data \citep{carmon2019unlabeled,alayrac2019labels,zhai2019adversarially}. 

Despite the generally impressive performance of AT against adversarial attacks, \citet{xu2021robust}  raised concerns about fairness in AT. It is observed that AT encourages significant disparity in natural and robust accuracy among different classes. Furthermore, AT disproportionately hurts the robust accuracy of input examples that are intrinsically harder to classify by a naturally trained classifier. For example, a naturally trained PreactResNet-18   on the CIFAR-10 dataset classifies “cat” and “ship” at approximately 89\% and 96\% accuracy, respectively, while the robust accuracy of “cat” and “ship” produced by an adversarially trained PreactResNet-18  on PGD-attacked CIFAR-10 dataset are  approximately 17\% and 59\% respectively \citep{xu2021robust}. In other words, the gap between the standard accuracy and robust accuracy for “cat” is 72\%, whereas the difference between the standard and robust accuracy for “ship” is 37\%. This disparity suggests that adversarial examples corresponding to certain classes may be treated differently by AT. 

Adversarial examples are crafted from natural examples that exhibit varying degrees of intrinsic vulnerability measured by their closeness to the class decision boundaries. 
Intuitively, adversarial examples corresponding to intrinsically vulnerable examples are moved farther across the decision boundary into wrong classes which makes them easy to misclassify. \citet{zhang2020geometry} observed that AT encourages learning adversarial variants of less vulnerable natural examples at
the expense of the intrinsically susceptible ones as the training
proceeds. This may partly explain the phenomenon of robust overfitting \citep{chen2020robust, zhang2020geometry}. Therefore, the performance of AT may be improved by assigning higher weights to the
robust losses of adversarial variants of vulnerable natural
examples.  

The idea of reweighting robust losses of  adversarial examples has recently been explored in the literature. 
\textit{GAIRAT} \citep{zhang2020geometry} 
assigns smaller or larger weights to the robust losses of adversarial examples based on the geometric distance of their corresponding natural counterpart to the decision boundary. 
Specifically, \textit{GAIRAT} measures the geometric distance using the least number of PGD steps needed to misclassify the natural example. As a result, the \textit{GAIRAT} reweighting function can only take a few values (corresponding to discrete PGD steps) and is unstable because its value depends largely on the initial starting point of each natural example which may change depending on the attack path \citep{liu2021probabilistic}. 
\citet{liu2021probabilistic} proposed reweighting robust losses based on the margin between the estimated ground-truth probability of the adversarial example and the probability of the most confusing label. We contend that this method does not exploit information about the intrinsic vulnerability of a natural example.
More importantly, we observe that these methods only yield competitive robustness on attacks like PGD and FGSM, but underperform against stronger white-box or black-box  attacks CW, AA, or SPSA. 




This paper proposes a novel instance-wise weight assignment function for assigning importance to the robust losses of adversarial examples used for adversarial training.  Our weight assignment function considers the intrinsic vulnerability of individual natural examples from which adversarial examples used during  training are crafted. 
We capture the intrinsic vulnerability of each natural example using the likelihood of it being correctly classified, which we estimate  
using the model's confidence about the example belonging to its true class. 
In addition, we argue that adversarial attacks  have a unique impact on each individual example. This contributes to the disparity in robustness accuracy exhibited by examples of different classes. Hence, we compute the discrepancies between a model's prediction on each  natural example and its corresponding adversarial example as another measure for the vulnerability of the example. 


We summarize the contributions of this paper as follows:
\begin{enumerate}

\item  We propose a novel Vulnerability-aware Instance Reweighting (\textit{VIR}) function for adversarial training. The proposed reweighting function takes consideration of the intrinsic vulnerability of individual  examples used for adversarial training and the information loss occasioned by adversarial attacks on natural examples. 

 \item We experimentally demonstrate the effectiveness of the proposed reweighting strategy in improving adversarial training.
 
 \item We show that existing reweighting methods \textit{GAIRAT} \citep{zhang2020geometry} and \textit{MAIL}\citep{liu2021probabilistic} only yield significant robustness against  attacks  FGSM and PGD at the expense of stronger attacks  CW \citep{carlini2017towards}, Autoattack\citep{croce2020reliable}, and FMN \citep{pintor2021fast}. 
 Using various datasets and models, we show that the proposed \textit{VIR} method consistently improves over the existing reweighting methods across various white-box and black-box attacks.
 
\end{enumerate}

\section{RELATED WORK}
\label{gen_inst}

\textbf{Adversarial Attacks.} Since it became known that deep neural networks (DNN) are vulnerable to norm-bounded adversarial attacks \citep{biggio2013evasion,szegedy2013intriguing}, a number of sophisticated adversarial attack algorithms have been proposed \citep{goodfellow2014explaining,madry2018towards,carlini2017towards,athalye2018obfuscated,dong2018boosting,moosavi2016deepfool}. Adversarial attacks can broadly be classified into  white-box and black-box attacks. White-box attacks are crafted with the attacker having full access to the model parameters. Prominent white-box attacks include Fast Gradient Sign Method \citep{goodfellow2014explaining}, DeepFool \citep{moosavi2016deepfool}, C\&W \citep{carlini2017towards}, Projected Gradient Descent (PGD) \citep{madry2018towards} etc. On the other hand, in black-box settings, the attacker has no direct access to the model parameters, and black-box attacks usually rely on a substitute model \citep{papernot2017practical}, or gradient estimation of the target model \citep{ilyas2018black,uesato2018adversarial}.

\textbf{Adversarial Robustness.} To mitigate the potential threat of adversarial attacks, extensive research has been conducted, leading to various methods  \citep{guo2018countering,song2017pixeldefend,papernot2016distillation,madry2018towards,zhang2019theoretically,atsague2021mutual}. However, some of the proposed methods were later found to be ineffective against strong attacks \citep{athalye2018obfuscated}. Adversarial Training (AT) \citep{goodfellow2014explaining, madry2018towards}, which requires training a  classifier with adversarial examples, has been found to be  effective to a degree in achieving robustness to adversarial examples. Formally, AT involves  crafting adversarial examples during training and solving a saddle point problem 
  formulated as:
\begin{align} \label{min-max}
        \min_{\bm{\theta}} \mathds{E}_{(\textbf{x},y) \sim \mathcal{D}}   \left[ \max_{\textbf{x}' \in B_{\epsilon}(\textbf{x})} L(f_{\theta}(\textbf{x}'), y) \right]
\end{align}
where $y$ is the true label of input feature $\bf{x}$, $L()$ is the loss function, $\bm{\theta}$ are the model parameters, and \(B_{\epsilon}(\mathbf{x}) : \{\textbf{x}' \in \mathcal{X}: \|{\textbf{x}}' - \textbf{x}\|_p \leq \epsilon \}\) represents the $l_p$ norm ball centered around $\bf{x}$ constrained by radius $\epsilon$ . In Eq. (\ref{min-max}), the inner maximization tries to obtain a worst-case adversarial
version of the input $\bf{x}$ that increases the loss. The outer minimization then tries to find
model parameters that would minimize this worst-case adversarial loss.
The relative success of AT \citep{madry2018towards} has inspired various AT variants such as \citep{zhang2019theoretically,wang2019improving,ding2019mma,kannan2018adversarial}, to cite a few.

\textbf{Re-weighting.} 
Recent works by \citet{zhang2020geometry} and \citet{liu2021probabilistic} have argued for assigning different weights to the losses corresponding to different adversarial examples in the training set. \citet{zhang2020geometry} assigns weights to an example based on its distance to the decision boundary. 
Examples that are closer to the decision boundary are assigned larger weights as follows: 
\begin{equation}\label{eq:GAIRAT_w_fn}
    w(\mathbf{x}_i, y_i) = \frac{1 + tanh(\lambda + 5 \times (1 - 2 k(\mathbf{x}_i, y_i)/K)) }{2}
\end{equation}
where $k(\textbf{x}_i, y_i)$ is the least number of PGD steps to cause misclassification of $\textbf{x}_i$, $K=10$ is the number of PGD steps used for generating the attack, and $\lambda$ is set to -1. The assignment function in Eq. (\ref{eq:GAIRAT_w_fn}) is used to reweight the robust losses on each adversarial example in Eq. (\ref{min-max}). 

Unlike \citep{zhang2020geometry} which uses a re-weighting function that is discrete (i.e. the weighting function depends on $k$ PGD iterations) and path-dependent, \citep{liu2021probabilistic} proposed a re-weighting scheme that is continuous and path-independent based on the probability margin between the estimated  ground-truth probability of an adversarial example
and the probability of the class closest to the ground-truth as follows:

\begin{equation}\label{eq:PM}
    PM(\mathbf{x}, y; \theta) = f_{\theta}(\mathbf{x}')_y - \max_{j, j \ne y} f_{\theta}(\mathbf{x}')_j
\end{equation}
The weight assignment function is then defined as:
\begin{equation}\label{eq:MAIL_w_fn}
    w(\mathbf{x}_i, y_i) = sigmoid(-\gamma(PM_i - \beta))
\end{equation}
where $\gamma$ and $\beta$ are hyperparameters.

This work takes a different perspective on reweighting robust losses. We consider the inherent vulnerability of the natural examples used for crafting adversarial examples and the  impact of adversarial attacks on each natural example.

\section{PRELIMINARIES}

We use bold letters to denote vectors. 
 We denote \(\mathcal{D} = \{\textbf{x}_i ,y_i\}_{i=1}^n\) as a data set of input feature vectors \(\textbf{x}_i \in \mathcal{X} \subseteq \mathbf{R}^d\) and labels \(y_i \in \mathcal{Y}\), where \(\mathcal{X}\) and \(\mathcal{Y}\) represent a feature space and a label set, respectively.  
 
 Let \(f_{\theta}:\mathcal{X} \to R^{C} \) denote a deep neural network (DNN) classifier parameterized by $\theta$ where $C$ represents the number of output classes. For any \(\mathbf{x} \in \mathcal{X}\), let the class label predicted by $f_{\theta}$ be   $F_{\theta}(\mathbf{x}) =  \arg \max_{k} f_{\theta}(\mathbf{x})_k$, where $f_{\theta}(\mathbf{x})_k$ denotes the $k$-th component of $f_{\theta}(\mathbf{x})$. $f_{\theta}(\mathbf{x})_y$ is the likelihood of $\mathbf{x}$ belonging to class y. $\displaystyle KL( P \Vert Q ) $  is used to represent the Kullback-Leibler divergence of distributions $P$ and $Q$. 
 
 We denote \(\|\cdot\|_p\) as the \(l_p\)- norm over \(\mathbf{R}^d\), that is, for a vector \(\mathbf{x} \in \mathbf{R}^d,  \|\textbf{x}\|_p = (\sum^d_{i=1}|\textbf{x}_i|^p)^{\frac{1}{p}}\). An \(\epsilon\)-neighborhood for \textbf{x} is defined as \(B_{\epsilon}(\mathbf{x}) : \{\mathbf{x}' \in \mathcal{X}: \|{\mathbf{x}}' - \mathbf{x}\|_p \leq \epsilon \}\). An adversarial example corresponding to a natural input $\mathbf{x}$ is denoted as $\mathbf{x}'$. 
 We refer to a DNN trained only on natural examples as standard network, and the one trained using adversarial examples as robust network or adversarially trained network. 

\begin{figure*}[ht]
\centering    
\includegraphics[width=1.04\textwidth ]{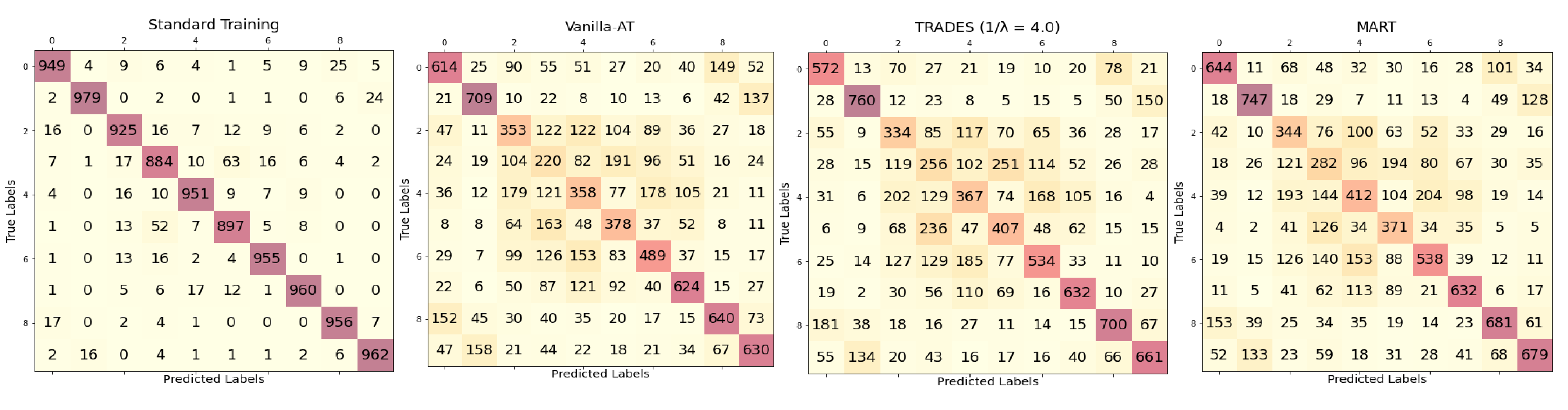}
\caption{ Confusion matrix displaying standard accuracy of  ResNet18  on the CIFAR-10 dataset and its robust accuracy under the PGD-100 attack  trained by AT variants  \textit{vanilla-AT} \citep{madry2018towards}, \textit{TRADES} \citep{zhang2019theoretically}, and  \textit{MART} \citep{wang2019improving} respectively.\label{confusion_fig}}
\vskip -0.1in
\end{figure*}

\section{PROPOSED METHOD}
The proposed approach is motivated by two major factors: (1) the unfairness exhibited by adversarial training - adversarial examples crafted from intrinsically vulnerable natural examples  are underrepresented in the later stages of adversarial training \citep{zhang2020geometry}, 
(2) the discrepancy between individual natural examples and their corresponding attacked variants.   

We propose an effective reweighting strategy for assigning importance to robust losses based on these two factors. We will make a case for how the information about the inherent vulnerability of natural examples and the effect of adversarial attacks on these examples may be captured and utilized as inputs to the proposed weight assignment function.

\subsection{Vulnerability of Natural Examples\label{sec:w1}}

Based on the findings in \citep{xu2021robust,ma2022tradeoff}, supported by the confusion matrices in Figure \ref{confusion_fig}, it is clear that adversarial training hurts adversarial examples of certain classes more than others, 
especially adversarial examples crafted from natural examples which are harder to classify under natural training. These natural examples are characterized by their closeness to the class decision boundaries \citep{xu2021robust,zhang2020geometry}. Following these findings, we propose a reweighting  component to pay attention to adversarial examples whose natural counterparts are more \textit{difficult} to classify.

We argue that a natural example being vulnerable is suggestive of the features of that example correlating to another (wrong) but semantically similar class. 
Hence, a vulnerable example may be classified with reasonably high confidence to the wrong class. Findings in \citep{wang2021zero} also indicate that classes that are semantically closer to each other have smaller distances to their decision boundaries; e.g., in the CIFAR-10 dataset, `cat' 
is semantically closer to `dog' than `airplane,' and `cat' and `dog' have smaller distances to the decision boundaries than `airplane.' In Figure \ref{confusion_fig}, classes `3' ('cat') and `5' (`dog'), which are semantically similar, exhibit more vulnerability under natural training. Input examples corresponding to these classes contain certain similar features. We believe that this relative feature similarity in semantically closer classes makes models less confident in distinguishing their examples. Given this insight, we define the vulnerability of a natural example in terms of a model's  estimated  class probability.

Given a natural input $\textbf{x}$ with its corresponding true label $y$, and a model $f_\theta(.)$, the model's estimated  probability of  $\textbf{x}$ belonging to $y$ is given as $f_\theta(\textbf{x})_y$. Note that $y$ is not necessarily the same as $\arg \max_{k} f_{\theta}(\textbf{x})_k$. We formally define the notion of relative vulnerability of inputs below.


\begin{definition}[\textit{Relative vulnerability} of  examples.]
Given two input-label pairs $(\textbf{x}_1, y_1)$ and $(\textbf{x}_2, y_2)$, we say the pair $(\textbf{x}_1, y_1)$ is more vulnerable than $(\textbf{x}_2, y_2)$, if $f_\theta(\textbf{x}_1)_{y_1}$ $<$ $f_\theta(\textbf{x}_2)_{y_2}$. 
\end{definition}

Based on Definition 4.1, model $f_\theta(.)$ is less confident in estimating the true class of more vulnerable examples. We provide more theoretical explanation to our notion of relative example vulnerability in Appendix \ref{app-A}.

Given that adversarial training involves
training on adversarial examples, and it unfairly  hurts adversarial examples crafted from vulnerable examples, it is intuitive that adversarial training sets up its decision boundary to favor adversarial examples of \textit{invulnerable} classes, as observed in \citep{xu2021robust}. This is indicated by the relatively low robust accuracy recorded on certain classes. Given this insight, we consider assigning unequal weights to adversarial examples based on the relative vulnerability of their original (natural) examples  with respect to their respective true labels.

Our proposed re-weighting strategy for adversarial training considers the vulnerability of each natural example used to generate adversarial examples for training. This vulnerability for an input $\textbf{x}$ with label $y$ is given by $f_{\theta}(\textbf{x})_y$. Adversarial examples corresponding to natural examples which are intrinsically vulnerable are assigned higher weights. We define a score function denoted as $S_v(\textbf{x}, y)$ for adaptively assigning importance to input example $\textbf{x}$ based on a model's confidence that $\textbf{x}$ belongs to the true label $y$. The score function is as follows: 
\begin{equation} \label{exponetial}
        S_{v}(\textbf{x}_i, y_i) = \alpha\cdot e^{- \gamma  f_{\theta}(\textbf{x}_i)_{y_i}}
\end{equation}
 where $\textbf{x}_i$ is a natural example, $y_i$ is the true label of $\textbf{x}_i$,  $e^{(.)}$ represents an exponential function, and  $\gamma>=1.0$ is a real-valued hyperparameter. $\alpha$ is introduced for numerical stability to ensure that  $S_v(\textbf{x},y)$  values are not too small and are useful. 
The formulation in Eq. \ref{exponetial} ensures that higher values are returned for more valnerable examples which have lower $f_{\theta}(\textbf{x}_i)_{y_i}$ values. Higher $\gamma$ values will result in a larger disparity in weights between vulnerable examples and less vulnerable examples. 


\subsection{Disparity between Natural and Adversarial Examples \label{sec:w2}}


The likelihood estimate of correctly classifying a natural example provides information about its intrinsic vulnerability before an adversarial perturbation is applied to it.
However, it does not supply  information about the discrepancies in the features learned by the  model on natural examples and their corresponding adversarial variants, which largely explain the large disparity between natural and robust errors.

\citet{ilyas2019adversarial} and \citet{tsipras2018robustness} characterized learned features into robust and non-robust. Robust features refer to features that remain correlated to the true label under  adversarial perturbation. In contrast, non-robust features are highly predictive, however, they are brittle and can be anti-correlated with the true label under adversarial perturbations. \citet{tsipras2018robustness} showed that a DNN classifier is able to learn any useful features on natural examples, but learns only robust features on adversarial examples, assigning zero or infinitesimal weight values to the predictive non-robust features. 

  The depletion of features in the adversarial examples results in different information loss for different examples. We note that there may be a  variation in non-robust features  learned by a DNN classifier in the different adversarial examples used in training. For instance, consider two  input-label pairs $(\mathbf{x}_1, y_1)$, $(\mathbf{x}_2, y_2)$   and their corresponding perturbed variants $\mathbf{x}'_1$, $\mathbf{x}'_2$. The robust features in $\mathbf{x}'_1$ may have a stronger correlation to $y_1$ than the robust features in $\mathbf{x}'_2$ to $y_2$. In fact, intrinsically vulnerable examples which are relatively weakly correlated with their classes may be affected more by an adversary since their features are further depleted. 
We propose to score the vulnerability of an example $\mathbf{x}$ as a weight for $\mathbf{x}$ by the discrepancy between the model's output prediction on  $\mathbf{x}$ and that on its adversarial variant $\mathbf{x}'$.
For simplicity, we score this discrepancies using the KL-divergence as follows:

\begin{equation}\label{kl_score}
    S_d(\mathbf{x}_i, \mathbf{x}_i') = KL(f_{\theta}(\mathbf{x}_i)\|f_{\theta}(\mathbf{x}_i'))
\end{equation}
where $S_d$ denotes the proposed discrepancy score function, and $f_{\theta}(\mathbf{x})$ and $f_{\theta}(\mathbf{x}')$ denote the model's predictions on natural and the corresponding adversarial examples respectively.

We note that the KL-divergence between a model's predictions on natural examples and adversarial examples is characterized as a boundary error in \textit{TRADES}  \citep{zhang2019theoretically}, and is utilized as a regularization term for improving robustness (see Eq. \ref{trades}). 
In contrast, here,  it is  used as a weighting (a value) and the gradient of the KL-divergence in Eq. (\ref{kl_score}) is not used.  


\subsection{Weight Assignment Function}
We propose a weight assignment function for re-weighting robust losses based on the observations  in Sections~\ref{sec:w1} and \ref{sec:w2}. The proposed \textit{Vulnerability-aware Instance Reweighting}  (\textit{VIR}) function for assigning importance to the  example $x_i$ in a training set is as follows: 
\begin{equation}\label{reweighting-func}
    w(\mathbf{x}_i, \mathbf{x}_i', y_i)  = S_{v}(\mathbf{x}_i, y_i) \cdot S_d(\mathbf{x}_i, \mathbf{x}_i') + \beta
\end{equation}
where $\beta$ is a hyperparameter. $S_{v}(\mathbf{x}, y)$  ensures that emphasis is given to vulnerable examples which are disproportionately hurt by adversarial training. The proposed weight assignment function assigns the largest weights to robust losses corresponding  to vulnerable examples having the most significant discrepancy between the model's outputs on the natural and adversarial variants. Furthermore, the least weights are assigned to robust losses corresponding to invulnerable  examples having the least  discrepancy between the model's outputs on them and their corresponding adversarial variants. The hyperparameter $\beta$ allows us to lower-bound the reweighting function and to balance the relative strength of the lowest and highest weight values.

As a comparison, \textit{GAIRAT} \citep{zhang2019theoretically} assigns importance to robust losses based on the $k$ PGD steps required to attack an example $\mathbf{x}$. The intuition is that vulnerable inputs take fewer steps to move across the decision boundary. However, as noted in \citep{liu2021probabilistic}, this approach is problematic because it is path-dependent, i.e, two inputs with similar starting points may reach their end points using different paths, thus making it unreliable. In addition, the reweighting function is constrained to accepting a few discrete values. In contrast, the proposed reweighting function in eqn (\ref{reweighting-func})  takes continuous values and is path-independent. 

\subsection{Applying the Weight Assignment Function }
We apply the proposed weight assignment function to prominent adversarial training methods  vanilla AT \citep{madry2018towards} and TRADES \citep{zhang2019theoretically} by assigning importance to robust losses computed by these adversarial training methods. Specifically,  we re-write the training objective of the vanilla AT as:
  \begin{equation}\label{vanilla-at-reweighted}
   \sum_i w(\mathbf{x}_i, \mathbf{x}_i', y_i) \cdot L_{CE}(f_\theta(\mathbf{x}'_i), y_i)
\end{equation}
where $L_{CE}$ refers to the cross-entropy loss function. The TRADES robust loss, originally stated as:
\begin{equation}\label{trades}
 \sum_i L_{CE}(f_\theta(\mathbf{x}_i), y) + \frac{1}{\lambda} \cdot KL(f_\theta(\mathbf{x}_i)\|f_\theta(\mathbf{x}'_i)), 
\end{equation}
is re-written  as: 
\begin{equation}\label{trades-reweighted}
 \sum_i L_{CE}(f_\theta(\mathbf{x}_i), y) + \frac{1}{\lambda} \cdot w(\mathbf{x}_i, \mathbf{x}_i', y_i) \cdot KL(f_\theta(\mathbf{x}_i)\|f_\theta(\mathbf{x}'_i))
\end{equation}
where $\lambda$ is a regularization hyper-parameter. We term the training objectives in Eq. (\ref{vanilla-at-reweighted}) and (\ref{trades-reweighted}) as \textbf{VIR-AT} and  \textbf{VIR-TRADES} respectively.

\textbf{Burn-in Period.} During the training, the weight $w(\mathbf{x}_i, \mathbf{x}'_i, y_i)$ is set to 1 in the initial epochs. The application of the proposed  weight assignment function is delayed to later epochs. This is because at the initial training phase, the deep model has not sufficiently learned, and thus is less informative. Disregarding this fact may mislead the training process.  \citet{zhang2020geometry} used a similar approach in implementing their re-weighting strategy.

Our proposed VIR-AT algorithm is summarized in the following:

\begin{algorithm}
\caption{VIR-AT Algorithm.}\label{alg:vir}
\hspace*{\algorithmicindent} \textbf{Input:} a neural network model with the parameters $\theta$, step sizes $\kappa_1$ and $\kappa_2$, and a training dataset $\mathcal{D}$ \hspace*{\algorithmicindent} of size n.\\
\hspace*{\algorithmicindent} \textbf{Output:} a robust model with parameters $\theta^*$
\begin{algorithmic}[1] 
\For{$epoch = 1$ to num\_epochs}
\For{$batch = 1$ to num\_batchs}
\State sample a mini-batch $\{(x_i, y_i)\}_{i=1}^{M}$ from $\mathcal{D}$;\Comment{mini-batch of size $M$.}

\For{$i = 1$ to M} 
\State $\mathbf{x}_i^{'}$ $\leftarrow$ $\mathbf{x}_i$ + 0.001 $\cdot$  $\mathcal{N}(0, 1)$, where $ \mathcal{N}(0, I)$ is the Gaussian distribution with zero mean and \State identity variance.

\For{$k = 1$ to $K$}

\State  $\mathbf{x}_{i}' \leftarrow \prod_{ B_\epsilon(\mathbf{x}_i)}(x_{i} + \kappa_1 \cdot sign (\nabla_{\mathbf{x}_{i}'} \cdot L(f_{\theta}(\mathbf{x}_{i}'), y_i) )$;      \Comment{$\prod$ is a projection operator.}

\EndFor
 \State $S_{v}(\mathbf{x}_i, y_i) \leftarrow \alpha \cdot e^{-{\gamma}f_{\theta}(\mathbf{x}_i)_y}$  
 \State $S_{d}(\mathbf{x}_{i}', \mathbf{x}_i) \leftarrow KL(f_{\theta}(\mathbf{x}_i) \| f_{\theta}(\mathbf{x}_{i}'))$
 
 \State $w_{i}(\mathbf{x}_i, \mathbf{x}_{i}', y_i) \leftarrow S_{v}(\mathbf{x}_i, y_i)\cdot S_{d}(\mathbf{x}_{i}', x_i) + \beta$; \hspace{1cm} \Comment{$w_{i}(\mathbf{x}_i, \mathbf{x}_{i}', y_i) \leftarrow 1$ if epoch $\leq$ 76}

\EndFor

\State $\theta$ $\leftarrow$  $\theta - \kappa_2  \nabla_\theta\sum_{i=1}^M w_{i}(\mathbf{x}_{i}', \mathbf{x}_i, y_i) \cdot L(f_{\theta}(\mathbf{x}_{i}'), y_i)$ 
\EndFor
\EndFor

\end{algorithmic}
\end{algorithm}

\section{EXPERIMENTAL SECTION}
In this section, we verify the effectiveness of the proposed re-weighting function through extensive experiments on various datasets including CIFAR-10 \citep{krizhevsky2009learning}, CIFAR-100 \citep{krizhevsky2009learning}, SVHN\citep{netzer2011reading}, and TinyImageNet\citep{deng2009imagenet}. We employed ResNet-18 (RN-18) \citep{he2016deep} and WideResNet-34-10 (WRN-34-10) \citep{he2016deep} as the backbone models for exploring the effectiveness of the proposed method on CIFAR-10, while CIFAR-100, SVHN, and TinyImageNet are evaluated  on ResNet-18. 

\subsection{Experimental Settings \label{sec:setting}}
The models are trained for 115 epochs, using mini-batch gradient
descent with momentum 0.9, batch size 128, weight decay 3.5e-3 (RN-18) and 7e-4 (WRN-34-10). The learning rates are set to 0.01 and 0.1 for RN-18 and WRN-34-10 respectively. In both cases, the learning rates are decayed by a factor of 10 at 75th, and
then at 90th epoch. In \textit{VIR-AT} and \textit{VIR-TRADES}, we introduced the proposed reweighting function on the 76th epoch following  \citep{zhang2020geometry}. 

The adversarial examples used during training are obtained by perturbing each image using
the Projected Gradient Descent (PGD) \citep{madry2018towards} with the following hyperparameters: $l_{\infty}$ norm $\epsilon$ = $8/255$, step-size $\kappa$ = 2/255, and  $K = 10$ iterations.

\subsection{Baselines} We compare the robustness obtained using \textit{VIR-AT} and \textit{VIR-TRADES} with prominent AT methods  \textit{vanilla-AT} \citep{madry2018towards}, \textit{TRADES} \citep{zhang2019theoretically}, and  \textit{MART} \citep{wang2019improving}. In addition, we compare with the state-of-the-art reweighting schemes  \textit{GAIRAT} \citep{zhang2020geometry} and \textit{MAIL}\citep{liu2021probabilistic}.  
We conducted additional experiments on recent data augmentation-based defense \citep{wang2023better}  and  the results  are provided in Appendix \ref{app-B}.

\subsection{Hyperparameters} 
\textbf{Baseline Hyperparameters.} The trade-off hyperparameter $\frac{1}{\lambda}$ is set to 6.0 for training WRN-34-10 and 4.0 for RN-18 with \textit{TRADES}. As recommended by the authors, we set the regularization hyperparameter $\beta$ to 5.0 for training with \textit{MART}.

\textbf{{VIR  Hyperparameters.}} {The values of constants $\alpha$ and $\beta$ are heuristically determined and set  to 7.0 and 0.007 respectively in \textit{VIR-AT} and  
8.0 and 1.6 in \textit{VIR-TRADES}. Similarly, we set the value of $\gamma$ to 10.0 and 3.0  in \textit{VIR-AT} and \textit{VIR-TRADES} respectively.  We set the value of $\gamma$ to 3.0 for training TinyImageNet with VIR-AT. Also, the value of $\frac{1}{\lambda}$ is set to 5.0 for training \textit{VIR-TRADES}.} 


\subsection{Threat Models}
The performance of the proposed reweighting function was evaluated using  attacks under \textit{White-box} and \textit{Black-box} settings and \textit{Auto attack}.

\textbf{White-box attacks.} These attacks have unfettered access to model parameters. To evaluate robustness on CIFAR-10 using RN-18 and WRN34-10, we apply the Fast Gradient Sign Method (FGSM) \citep{goodfellow2014explaining} with $\epsilon = 8/255$ ; PGD attack with $\epsilon = 8/255$, step size $\kappa$ = $1/255$, $K = 100$; CW (CW loss \citep{carlini2017towards} optimized by PGD-20) attack with $\epsilon = 8/255$, step size $1/255$. In addition, we evaluated the robustness of trained models against 100 iterations of $l_{\infty}$ version of Fast Minimum-norm (FMN) attacks \citep{pintor2021fast}. 
On SVHN and CIFAR-100, we apply PGD attack with $\epsilon = 8/255$, step size $\kappa$ = $1/255$, $K = 100$. We limited the white-box evaluation on TinyImageNet to PGD-20. 

\textbf{Black-box attacks.} Under black-box settings, the adversary has no access to the model parameters. We evaluated robust models trained on CIFAR-10 against strong query-based black-box attacks  Square \citep{andriushchenko2020square} with 5,000 queries and SPSA \citep{uesato2018adversarial} with 100 iterations,
perturbation size 0.001 (gradient estimation), learning rate = 0.01, and 256 samples for each gradient
estimation. All black-box evaluations are made on trained WRN-34-10. 

\textbf{\textbf{Ensemble of Attacks}.} Trained models are tested on powerful ensembles of attacks such as \textit{Autoattack} \citep{croce2020reliable}, which  consisting of APGD-CE \citep{croce2020reliable}, APGD-T \citep{croce2020reliable}, FAB-T \citep{croce2020minimally}, and Square (a black-box attack) \citep{andriushchenko2020square} attacks. In addition, we evaluated the trained models on the Margin Decomposition Ensemble (MDE) attack \citep{ma2023imbalanced}.

\subsection{Performance Evaluation}
We summarize our results on CIFAR-10 using RN-18 and WRN-34-10  in Tables \ref{table:rn-w-box-result} and \ref{table:wrn-w-box-result}, respectively. Moreover, we report results on CIFAR-100, SVHN, and Tiny Imagenet using RN-18 in Tables \ref{table:rsnt-svhn-w-box-result} and \ref{table:tiny-box-result}. Finally, black-box evaluations are made on trained WRN-34-10, and the results are reported in Table \ref{table:black-box-result}. 
Experiments were repeated four times with different random seeds; the mean and standard deviation are
subsequently calculated. Results are reported as
\textit{mean} $\pm$ \textit{std}.

\begin{table*}[h!]
\caption{Comparing white-box attack robustness (accuracy \%) for ResNet-18 on CIFAR-10. For all methods, distance $\epsilon = 0.031$.  We highlight the best-performing method under each attack.}
\label{table:rn-w-box-result}
\vskip -1.2in
\begin{center}
\begin{small}
\begin{sc}
\begin{tabular}{lcccccccc}
\hline
\hline
Defense & Natural & FGSM  & PGD-100 & CW & FMN-100 & AA & MDE&  \\
\hline
 AT &84.12\tiny{$\pm$ 0.16}
                     & 57.88\tiny{$\pm$ 0.13}& 51.58\tiny{ $\pm$ 0.17}& 51.75\tiny{$\pm$ 0.23}& 48.92\tiny{$\pm$0.25}& 47.92\tiny{$\pm$0.35}&47.90 \tiny{$\pm$}0.27 \\
 TRADES      &83.56\tiny{$\pm$0.35}&57.82\tiny{$\pm$0.32}& 52.07\tiny{$\pm$0.25}&52.26\tiny{$\pm$0.07}& 49.74\tiny{$\pm$0.30}&48.32\tiny{$\pm$0.19}&48.29 \tiny{$\pm$}0.19 \\
 MART        &80.32\tiny{$\pm$0.38}&58.01\tiny{$\pm$0.19}& 54.03\tiny{$\pm$0.28}&49.29\tiny{$\pm$0.11}& 49.82\tiny{$\pm$0.08}&47.61\tiny{$\pm$0.27}&47.55 \tiny{$\pm$}0.12  \\
 
 GAIRAT        & 83.33\tiny{$\pm$0.19}&60.20\tiny{$\pm$0.29}&54.91\tiny{$\pm$ 0.19}&40.95\tiny{$\pm$0.39}&38.62\tiny{$\pm$0.29}&32.89 \tiny{$\pm$0.33}&32.70 \tiny{$\pm$}0.18 \\
MAIL-AT 
        &84.32\tiny{$\pm$0.46}
                  &60.11\tiny{$\pm$0.39}
                           &55.25\tiny{$\pm$0.23}
                                 &48.88\tiny{$\pm$0.11}&46.83  \tiny{$\pm$0.17}&44.22 \tiny{$\pm$0.21}&44.14 \tiny{$\pm$}0.21 \\
\hline
\textbf{VIR-TRADES}&82.03\tiny{$\pm$0.13}&59.62\tiny{$\pm$}0.08&54.86\tiny{$\pm$}0.17&\textbf{53.11\tiny{$\pm$0.17}}&\textbf{51.95\tiny{$\pm$}0.09}& \textbf{51.03\tiny{$\pm$}0.16}&50.89 \tiny{$\pm$}0.12  \\
\textbf{VIR-AT}&\textbf{84.59\tiny{$\pm$0.18}} &  \textbf{61.35\tiny{$\pm$}0.13}&\textbf{56.42\tiny{$\pm$} 0.18}&52.18\tiny{$\pm$0.15}&50.56\tiny{$\pm$}0.12&48.21\tiny{$\pm$}0.08&48.04 \tiny{$\pm$}0.08   \\
 
\hline
\hline
\end{tabular}
\end{sc}
\end{small}
\end{center}

\vskip -0.1in
\end{table*}

\begin{table*}[h!]
\caption{Comparing white-box attack robustness (accuracy \%) for WideResNet-34-10 on CIFAR-10. For all methods, distance $\epsilon = 0.031$.  The best-performing methods under each attack are highlighted.}
\label{table:wrn-w-box-result}
\vskip -0.3in
\begin{center}
\begin{small}
\begin{sc}
\begin{tabular}{lcccccccc}
\hline
\hline
Defense & Natural & FGSM  & PGD-100 & CW & FMN-100 & AA & MDE& \\
\hline
 AT &86.17\tiny{$\pm$0.26}
                     &61.68\tiny{$\pm$0.13}& 54.45\tiny{$\pm$0.31}&55.17 \tiny{$\pm$0.33}&54.05\tiny{$\pm$0.21}& 51.90\tiny{$\pm$0.28}& 51.74 \tiny{$\pm$}0.19\\
 TRADES      &85.20\tiny{$\pm$}0.25&61.47\tiny{$\pm$0.35}& 54.81\tiny{$\pm$0.31}&56.02\tiny{$\pm$0.29}&53.95 \tiny{$\pm$}0.10&53.09\tiny{$\pm$}0.18&52.71 \tiny{$\pm$}0.12 \\
  MART        &84.59\tiny{$\pm$}0.11&62.20\tiny{$\pm$}0.14& 56.45\tiny{$\pm$}0.16& 54.52\tiny{$\pm$}0.11& 53.17\tiny{$\pm$}0.12&51.21\tiny{$\pm$}0.23& 50.92 \tiny{$\pm$}0.15 \\ 
\hline

 
 GAIRAT        &85.24\tiny{$\pm$}0.19&62.67\tiny{$\pm$0.36}&57.09\tiny{$\pm$} 0.27&44.96\tiny{$\pm$}0.2&44.50\tiny{$\pm$}0.05& 42.29\tiny{$\pm$}0.11& 41.92 \tiny{$\pm$}0.15\\
MAIL-AT 
        &84.83\tiny{$\pm$}0.39
                  &64.09\tiny{$\pm$}0.32
                           &58.86\tiny{$\pm$}0.25
                                 &51.26\tiny{$\pm$}0.20& 51.64\tiny{$\pm$}0.15 & 47.10\tiny{$\pm$}0.22&47.04 \tiny{$\pm$}0.11\\

 \hline                              
\hline
    \textbf{VIR-TRADES}&84.95\tiny{$\pm$0.21}&63.07 \tiny{$\pm$}0.17&57.56\tiny{$\pm$}0.21&\textbf{56.92 \tiny{$\pm$0.19}}&\textbf{55.72\tiny{$\pm$}0.13}& \textbf{54.55\tiny{$\pm$}0.26}&54.14 \tiny{$\pm$}0.09 \\
\textbf{VIR-AT}&\textbf{87.13\tiny{$\pm$0.36}}&\textbf{64.71\tiny{$\pm$}0.29}&\textbf{59.82\tiny{$\pm$}0.29}&56.11\tiny{$\pm$ 0.16}&54.14\tiny{$\pm$}0.23&51.94\tiny{$\pm$}0.22&51.83 \tiny{$\pm$}0.12  \\
 
\hline
\hline
\end{tabular}
\end{sc}
\end{small}
\end{center}
\vskip -0.1in
\end{table*}

\begin{table*}[!h]
\caption{Comparing white-box attack robustness (accuracy \%) for RN-18 on SVHN and CIFAR-100. For all methods, distance $\epsilon = 0.031$.}
\label{table:rsnt-svhn-w-box-result}
\vskip -0.3in
\begin{center}
\begin{small}
\begin{sc}
\begin{tabular}{lccc|ccc}
\hline
\hline
& \multicolumn{3}{ c |}{\textbf{SVHN}}& \multicolumn{3}{ c }{\textbf{CIFAR-100}}   \\   
\hline
Defense  & Natural & PGD-100 & AA & Natural & PGD-100 & AA \\
\hline
 AT 
                     & \textbf{92.94\tiny{$\pm$0.46}} & 54.74\tiny{$\pm$0.28} & 45.94\tiny{$\pm$0.29} & 59.60\tiny{$\pm$ 0.35}  & 28.57\tiny{$\pm$ 0.25} & 24.75 \tiny{$\pm$} 0.21\\
 TRADES      & 92.14\tiny{$\pm$0.43} & 55.24\tiny{$\pm$0.23} & 45.64\tiny{$\pm$0.29} & 60.73 \tiny{$\pm$0.33} & 29.83\tiny{$\pm$ 0.25} &24.83\tiny{$\pm$ 0.29}\\
  MART         & 91.84\tiny{$\pm$0.46} & 55.54\tiny{$\pm$0.21} & 43.39\tiny{$\pm$0.36} & 54.19\tiny{$\pm$} 0.26 &29.94\tiny{$\pm$0.21}&25.30\tiny{$\pm$0.50}\\
 \hline

 
 GAIRAT         & 90.47\tiny{$\pm$ 0.58} & 61.37\tiny{$\pm$ 0.28} & 37.27\tiny{$\pm$ 0.31} & 58.43\tiny{$\pm$ 0.26} &25.74\tiny{$\pm$ 0.41}&17.57 \tiny{$\pm$ 0.33}\\  
MAIL-AT 
        
                  & 91.54\tiny{$\pm$0.35}
                           & \textbf{62.16\tiny{$\pm$ 0.18}} 
                                 & 41.18\tiny{$\pm$0.29}&\textbf{ 60.74\tiny{$\pm$0.15}}  & 27.62\tiny{$\pm$0.0.27} & 22.44\tiny{$\pm$0.53} \\
\hline
\hline
\textbf{VIR-TRADES} &  89.24 \tiny{$\pm$0.23} &58.63 \tiny{$\pm$0.32} & \textbf{50.06\tiny{$\pm$0.22}}& 59.20 \tiny{$\pm$0.35} &31.69 \tiny{$\pm$ 0.24}&\textbf{26.25 \tiny{$\pm$0.25} } \\
\textbf{VIR-AT} & 91.65\tiny{$\pm$0.35} &61.52 \tiny{$\pm$0.43}  & 45.91 \tiny{$\pm$ 0.41}& 59.85 \tiny{$\pm$0.11}  & \textbf{32.06\tiny{$\pm$0.35} }&24.73 \tiny{$\pm$0.22} \\
 
\hline
\hline
\end{tabular}
\end{sc}
\end{small}
\end{center}
\vskip -0.1in
\end{table*}

\begin{table}[!h]
\caption{Comparing white-box  attack robustness (accuracy \%) for ResNet-18 on TinyImageNet.  Perturbation size  $\epsilon$ = 8/255 and step size $\kappa$ = 1/255 are used for all methods.}
\label{table:tiny-box-result}
\vskip 0.15in
\begin{center}
\begin{small}
\begin{sc}

\begin{tabular}{lccc}
\hline
\hline
Defense & Natural & PGD-20 & AA\\
\hline
 AT    & 48.79\tiny{$\pm$0.15}& 23.96\tiny{$\pm$0.15}&18.06\tiny{$\pm$0.15} \\
 TRADES   & 49.11\tiny{$\pm$0.23}& 22.89\tiny{$\pm$0.26}&16.81 \tiny{$\pm$0.19} \\
 MART   &45.91 \tiny{$\pm$0.24}& 26.03 \tiny{$\pm$0.36}&\textbf{19.23 \tiny{$\pm$0.23}}\\
 \hline
 GAIRAT  &46.09 \tiny{$\pm$0.14}&17.21\tiny{$\pm$0.33}&12.92 \tiny{$\pm$0.23}\\
 MAIL-AT      & 49.72 \tiny{$\pm$0.36} & 24.32 \tiny{$\pm$0.33}&17.61 \tiny{$\pm$0.35} \\
 \hline
 \hline
 \textbf{VIR-TRADES}    & \textbf{51.17 \tiny{$\pm$0.19}} & 25.82 \tiny{$\pm$0.13}& 18.68 \tiny{$\pm$0.19}\\
 \textbf{VIR-AT}       & 49.09 \tiny{$\pm$0.25} & \textbf{26.65\tiny{$\pm$0.32}}&18.42 \tiny{$\pm$0.21} \\
 \hline
 \hline
\end{tabular}
\end{sc}
\end{small}
\end{center}
\vskip -0.1in
\end{table}

\begin{table}[!h]
\caption{Comparing black-box attack robustness (accuracy \%) for Wideresnet-34-10 trained on CIFAR-10.}
\label{table:black-box-result}
\vskip 0.15in
\begin{center}
\begin{small}
\begin{sc}
\begin{tabular}{lccc}
\hline
\hline
Defense &  Square & SPSA & \\
\hline
 AT  & 60.12\tiny{$\pm$0.25}  & 61.05\tiny{$\pm$0.16} & \\
 TRADES      & 59.18\tiny{$\pm$0.19} & 61.15\tiny{$\pm$0.09} & \\
 MART      & 58.72\tiny{$\pm$0.18} & 58.93\tiny{$\pm$0.11}  & \\
 \hline
 GAIRAT      & 51.97\tiny{$\pm$0.10} & 52.15 \tiny{$\pm$0.30}& \\
 MAIL-AT      & 58.34\tiny{$\pm$0.16} & 
 59.24\tiny{$\pm$0.32} & \\

 \hline
 \hline

   \textbf{VIR-TRADES}        & 59.73\tiny{$\pm$0.08} & 61.45\tiny{$\pm$0.26} & \\
 \textbf{VIR-AT}        & \textbf{60.51 \tiny{$\pm$0.11}} & \textbf{62.59\tiny{$\pm$0.21}} & \\

 \hline
 \hline
\end{tabular}
\end{sc}
\end{small}
\end{center}
\vskip -0.1in
\end{table}

\textbf{Comparison with prominent reweighting methods.} The results in Tables \ref{table:rn-w-box-result}-\ref{table:tiny-box-result} show that, in general, the proposed \textit{VIR} method  is more effective than the two prominent reweighting methods \textit{MAIL-AT} and \textit{GAIRAT}, especially under stronger attacks  FMN, CW, and Autoattack. For example, \textit{VIR-AT} significantly outperforms \textit{MAIL-AT} against FMN-100 (+2.5\%), CW (+ 5.0\%), and  Autoattack (+ 5.0\%) attacks on CIFAR-10 using WRN-34-10.  The proposed \textit{VIR-AT} method achieves  large improvement margins over \textit{GAIRAT} against FMN-100 (+ 9.6\%), CW (+ 11.1\%), and Autoattack (+ 9.6\%) on CIFAR-10 using WRN-34-10. 
\textit{VIR-AT} also consistently performs better than \textit{MAIL-AT} and \textit{GAIRAT} on CIFAR-100 and TinyImageNet against PGD and Autoattack attacks. On SVHN, \textit{MAIL-AT} slightly outperforms our \textit{VIR-AT} method  against PGD-100, however, \textit{VIR-AT} performs significantly better against Autoattack  by over 4\%. 

Experimental results presented in Table \ref{table:black-box-result} show that \textit{VIR-AT} performs better than \textit{GAIRAT} and \textit{MAIL-AT} against strong query-based black-box attacks  \textit{Square} and \textit{SPSA}. Furthermore, the poorer performance recorded by \textit{GAIRAT} 
on \textit{black-box} attacks compared to the PGD-100 \textit{white-box} attack may potentially indicate that \textit{GAIRAT} 
encourages gradient obfuscation according to the arguments made in   \citep{athalye2018obfuscated}.

\textbf{Comparison with  non-reweighted variants.} We applied the proposed reweighting method to a prominent variant of vanilla AT, TRADES \citep{zhang2019theoretically}. Experimental results in Tables \ref{table:rn-w-box-result}-\ref{table:black-box-result} show that \textit{VIR-TRADES} consistently improves upon \textit{TRADES} against all attacks, especially stronger attacks  CW, FMN, and Autoattack. 

In addition, the confusion matrices displayed in Figure \ref{confusion2} clearly shows the improvement in class-wise accuracies of data-sets corresponding to harder classes. We show in Figure \ref{class_weight} the class-weight distribution (the sum of all sample weights in each class). From 
Figure \ref{class_weight}, we observe 
that our proposed instance-wise reweighting assigns the largest weight to the most vulnerable class `3' and the least weight to the least vulnerable class `1' for both VIR-AT and VIR-TRADES.


\begin{figure*}[ht]
\centering    
\includegraphics[width=0.7\textwidth ]{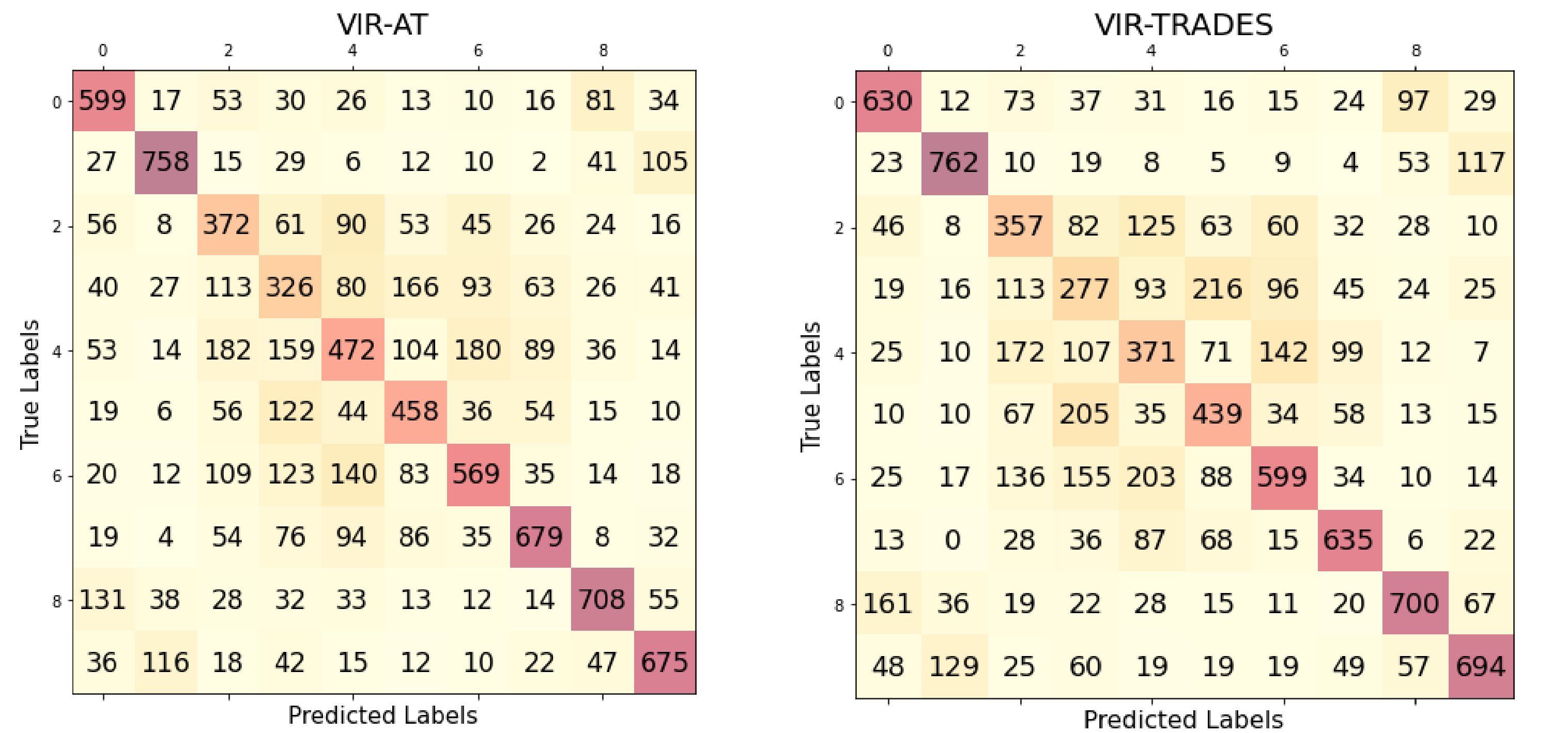}
\caption{Confusion matrix displaying robust accuracies under PGD-100 attack using VIR-AT and VIR-TRADES for ResNet18
on CIFAR-10 dataset.\label{confusion2} }
\vskip -0.1in
\end{figure*}

\begin{figure}[ht]
\centering    
\includegraphics[width=1.0\textwidth ]{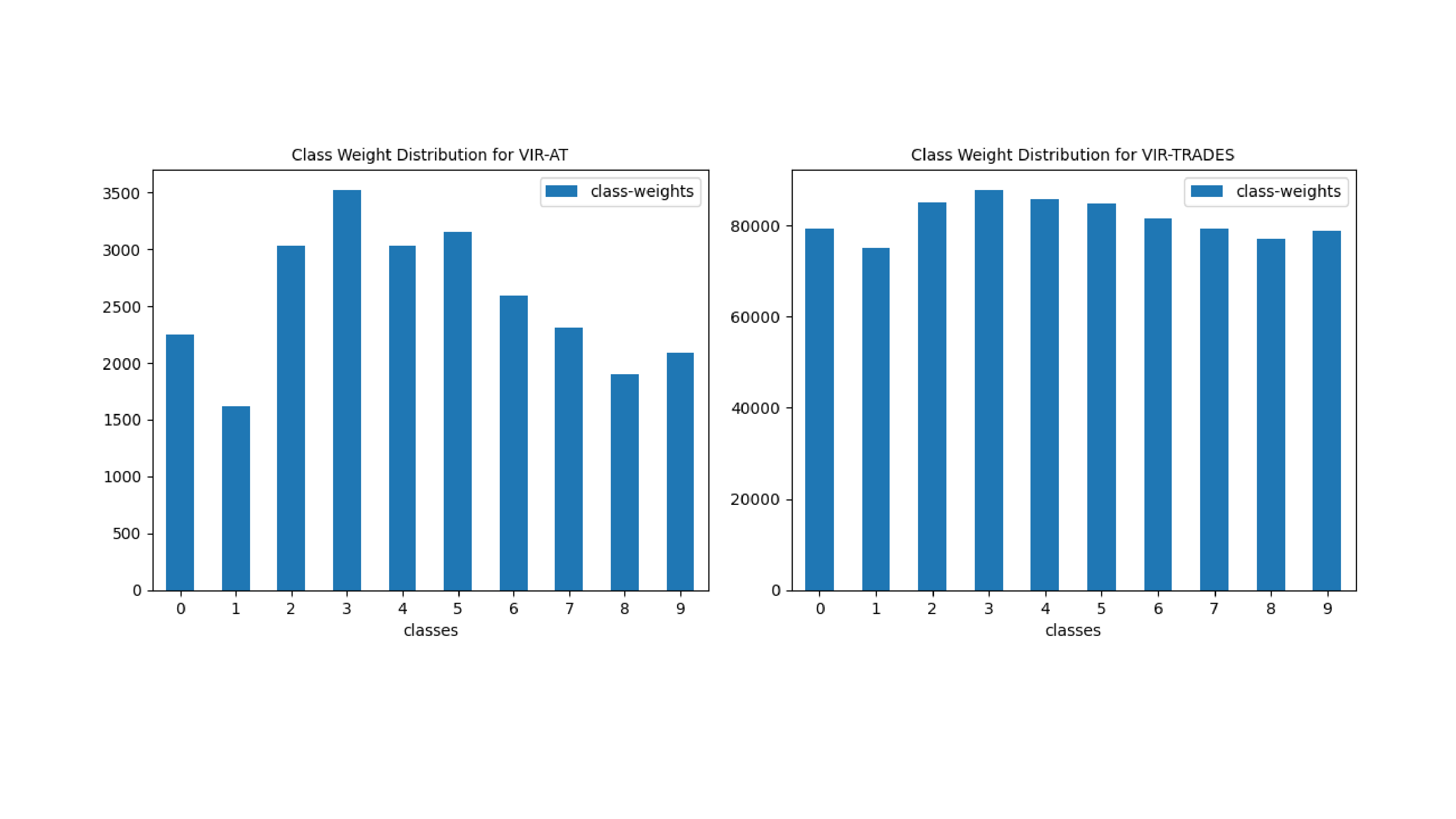}
\vspace{-7.5em}
\caption{Class-weight distribution (sum of all sample weights in each class) for VIR-AT and VIR-TRADES for ResNet18
during training on CIFAR-10 dataset.\label{class_weight} }

\end{figure}

Finally, these experimental results show that the existing reweighting methods \textit{GAIRAT}  and \textit{MAIL} improved upon the vanilla \textit{AT}  against   FGSM and PGD attacks but performed much worse against stronger attacks  CW, FMN, and Autoattack. In contrast, our proposed \textit{VIR-AT} performed significantly better than \textit{AT} on PGD-100 and FGSM without sacrificing  performance against CW, FMN, and Autoattack.
A likely explanation for the relatively poor performance of existing reweighting functions on stronger attacks such as Autoattack and CW is that they significantly diminish the influence of less vulnerable examples. While upweighting losses of vulnerable examples can be helpful,  over-relying on the vulnerable samples may be detrimental as observed in  \citep{dong2021data}. Our proposed VIR function achieved a good balance between weighting vulnerable and less vulnerable examples as shown in the ablation studies presented in the next section.

\subsection{Ablation Studies and Impact of Hyperparameters\label{sec:ablation}}

\subsubsection{{Ablation Studies}}
We conduct ablation studies on the proposed weight assignment function using ResNet-18 on CIFAR-10. The training settings are the same as those described in Section \ref{sec:setting}.  We study the influence of each
reweighting component on the robustness against white-box  and black-box attacks. The results are presented in Table~\ref{table:ablation}.

\begin{table*}[!hbt] 
\caption{Ablation studies on \textit{VIR-AT} showing the impact of reweighting components proposed in Eq. \ref{exponetial} and \ref{kl_score} on white-box and black-box attack robustness (accuracy \%).}
\label{table:ablation}
\vskip 0.15in
\begin{center}
\begin{small}
\begin{sc}
\begin{tabular}{lccccccc}
\hline
\hline
Reweighting\\component  &  Natural & PGD-100 & CW & AA & Square & SPSA\\
\hline
  $ L_{CE}(f_\theta(x'_i), y_i)$  &84.12\tiny{$\pm$0.16} & 51.58\tiny{$\pm$0.17} & 51.75\tiny{$\pm$0.23}&47.92\tiny{$\pm$0.35} &55.32\tiny{$\pm$0.09}&56.85\tiny{$\pm$0.29}  \\
  $ S_{v}(x_i,y_i)\cdot L_{CE}(f_\theta(x'_i), y_i)$  &84.35\tiny{$\pm$0.17} & 54.50\tiny{$\pm$0.11} & 49.61\tiny{$\pm$0.19}&45.48\tiny{$\pm$0.27} &54.68\tiny{$\pm$0.15}&56.08\tiny{$\pm$0.25}  \\
  $S_d(x_i, x'_i) \cdot L_{CE}(f_\theta(x'_i), y_i)$       & 83.90\tiny{$\pm$0.16} & 51.17\tiny{$\pm$0.14} & 51.90\tiny{$\pm$0.35}&47.95\tiny{$\pm$0.29}&56.22 \tiny{$\pm$0.19}&56.25\tiny{$\pm$0.21}\\
$w(x_i, x_i', y_i) \cdot  L_{CE}(f_\theta(x'_i), y_i)$      &\textbf{ 84.59\tiny{$\pm$0.18}} & \textbf{56.42\tiny{$\pm$0.18}}  &\textbf{52.18\tiny{$\pm$0.15}} &\textbf{48.21\tiny{$\pm$0.08}}&\textbf{56.89\tiny{$\pm$0.19}} &\textbf{57.35\tiny{$\pm$0.15}}\\

 \hline
 \hline
\end{tabular}
\end{sc}
\end{small}
\end{center}
\vskip -0.1in
\end{table*}

Reweighting $ L_{CE}(f_\theta(x'_i), y_i)$ with only $S_{v}(x_i, y_i)$ improves robustness against PGD-100, but yeilds reduced robustness against CW and Autoattack. When $ L_{CE}(f_\theta(x'_i), y_i)$ is reweighted using $S_{d}(x_i, x'_i)$, no significant improvement in robustness was observed against PGD-100, however, stable performance over stronger attacks are observed. Reweighting $ L_{CE}(f_\theta(x'_i), y_i)$ with the proposed reweighting function significantly improves robustness accuracy against both white-box and  black-box attacks.


The ablation studies on \textit{VIR-TRADES} on CIFAR-10 using ResNet-18 are presented in Table \ref{table:ablation2}. 

\begin{table*}[h]
\caption{Ablation studies on \textit{VIR-TRADES} showing the impact of reweighting components proposed in Eq. \ref{exponetial} and \ref{kl_score} on white-box and black-box attack robustness (accuracy \%). } 
\label{table:ablation2}
\setlength{\tabcolsep}{3pt}
\vskip 0.05in
\begin{center}
\begin{small}
\begin{sc}
\begin{tabular}{lccccccc}
\hline
\hline
Reweighting\\component  &  Natural & PGD-100 & CW & AA & Square & SPSA\\
\hline
  $L_{CE}(f_\theta(x_i), y) + \frac{1}{\lambda} KL(f_\theta(x_i)\|f_\theta(x'_i))$  &83.56\tiny{$\pm$0.35}& 52.07 \tiny{$\pm$0.25}& 52.26\tiny{$\pm$0.07}&48.32\tiny{$\pm$0.19} &55.47\tiny{$\pm$0.13}&56.36\tiny{$\pm$0.23}  \\
 $L_{CE}(f_\theta(x_i), y) + \frac{1}{\lambda} S_v(x_i, y_i) KL(f_\theta(x_i)\|f_\theta(x'_i))$       & 77.95\tiny{$\pm$0.11} & 54.21\tiny{$\pm$0.19} & 51.70\tiny{$\pm$0.13}&49.95\tiny{$\pm$0.15}&55.18 \tiny{$\pm$0.22}&56.11\tiny{$\pm$0.21}\\
  
  $ L_{CE}(f_\theta(x_i), y) + \frac{1}{\lambda} S_d(x_i, x'_i) KL(f_\theta(x_i)\|f_\theta(x'_i))$  &\textbf{ 86.59\tiny{$\pm$0.21}} & 49.71\tiny{$\pm$0.18} & 49.65\tiny{$\pm$0.15}&46.77\tiny{$\pm$0.13} &54.97\tiny{$\pm$0.11}&55.85\tiny{$\pm$0.15}  \\

$L_{CE}(f_\theta(x_i), y) + \frac{1}{\lambda} w(x_i, x_i', y_i) KL(f_\theta(x_i)\|f_\theta(x'_i))$      & 82.03\tiny{$\pm$-0.13} & \textbf{54.86\tiny{$\pm$0.17}}  &\textbf{53.11\tiny{$\pm$0.17}} &\textbf{51.03\tiny{$\pm$0.16}}&\textbf{56.58\tiny{$\pm$0.19}} &\textbf{57.80\tiny{$\pm$0.09}}\\

 \hline
 \hline
\end{tabular}
\end{sc}
\end{small}
\end{center}
\vskip -0.1in
\end{table*}

The results in Table \ref{table:ablation2} show that reweighting \textit{TRADES} with $S_v(x_i, y_i)$ yields better performance than the original \textit{TRADES} against PGD-100 (+ 2.14 \%) and Autoattack (+ 1.63 \%). However, it yields lower performance against CW (-0.56 \%) 
and natural examples (-5.61\%). Reweighting  \textit{TRADES}  using $S_d(x_i, x'_i)$ improves the natural accuracy by +3.03\% but yields lower performance against attacks. 
The proposed reweighting function $w(x_i, x'_i, y_i)$ consistently improves \textit{VIR-TRADES} over \textit{TRADES} against PGD-100 (+ 2.97\%), CW (+ 0.85\%), Autoattack (+ 2.71\%), Square attack (+ 1.1\%), and SPSA (+ 1.44\%). The results show neither $S_v(x_i, y_i)$ nor $S_d(x_i, x'_i)$ is sufficient for improving \textit{TRADES} and they achieve the best performance when combined. 

\subsubsection{Impact of Hyperparameters}
We provide a brief discussion of the impact of the hyperparameters $\gamma, \alpha$, and $\beta$ used in the proposed reweigthing function. Hyperparameter $\gamma$ appears in the power of the exponential function in Eq. \ref{exponetial}. Setting $\gamma$ high results in a large disparity in weights between vulnerable examples and less vulnerable examples. $\alpha$ helps with the numerical stability of Eq. \ref{exponetial}. Introducing $\beta$ into the reweighting function in Eq. \ref{reweighting-func} allows for a balance between relative weights of  vulnerable examples and  less vulnerable examples. Without adding $\beta$,  the reweighting function could potentially return very low values, which significantly diminishes the influence of less vulnerable training samples. On the other hand, if $\beta$ is set too high, the desired  reweighting effect is lost. 
The values of these hyperparameters are  heuristically determined. Experimental studies on the impact of  the hyperparameters on the  performance are shown in Appendix \ref{app-C}.



\section{Conclusion}

In this paper, we propose a novel vulnerability-aware instance-wise reweighting strategy for adversarial training. The proposed reweighting strategy takes into consideration the intrinsic vulnerability of natural examples used for crafting adversarial examples during adversarial training. We show that existing reweighting methods fail to achieve significant robustness against stronger white-box and black-box attacks. 
Lastly, we experimentally show that the proposed reweighting function effectively improves adversarial training without diminishing its performance against stronger  attacks. In addition, the proposed method shows improvement on every dataset evaluated.



\bibliographystyle{tmlr}

\appendix
\section{Appendix: Theoretical Explanation of Input Vulnerability \label{app-A}}


In this section, we show that, under a special setting, the example vulnerability directly corresponds to the  probability of predicting the true class . 
 We consider a binary classification task of natural examples sampled from a Gaussian mixture distribution, following the settings described in \citep{carmon2019unlabeled,schmidt2018adversarially,xu2021robust,ma2022tradeoff}. We formally define the settings below.
\begin{definition}[\textit{Gaussian Mixture Distribution}.] Let $-\mu$, $\mu$ be the mean parameters corresponding to data sampled from two classes $y = \{-1, +1\}$ according to Gaussian distribution.

Let $\sigma_{-}, \sigma_{+} $  represent the variances of the classes -1 and +1 . Then the Gaussian mixture model is defined by the following distribution  over ($\mathbf{x}$,$y$) $\in$ $R^d \times \{\pm1\}$: 
\begin{equation} \label{binary-class}
\begin{split}
y \stackrel{}{\sim} \{-1, +1\}  \quad  \mu = (\overbrace{\eta,...,\eta}^{d = dim}) \\
\mathbf{x} {\sim} \begin{cases}
			\mathcal{N}  (\mu, {{\sigma}_{+}^{2}}\mathcal{I}), & \text{if $y = +1$}\\ 
            \mathcal{N}   (-\mu, {{\sigma}_{-}^{2}}\mathcal{I}), & \text{if $y = -1$}
		 \end{cases}
\end{split}
\end{equation}
 $\mathcal{I}$ is the d-dimension identity matrix.   
\end{definition}

To design classes with different intrinsic vulnerabilities, we ensure that there is a K-factor difference between the   variances of the classes such that $\sigma_{-} : \sigma_{+} = 1 : K$ and $K > 1$. The larger variance of the examples corresponding to class +1 intuitively suggests higher vulnerability than an example corresponding to class -1. 

The analysis is done using a linear classifier as follows:
  \begin{equation}\label{linear-classifier}
   f(\mathbf{x})  = sign(\langle \omega , \mathbf{x} \rangle + b)
\end{equation}
where $\omega$ and $b$ respectively represent the model weights and bias term, and $sign(z)$ returns 1 if $z \geq 0$, else $-1$. 
The natural risk is denoted as:
\begin{equation} \label{eq1}
\begin{split}
\mathbf{R}_{nat} & = \mathds{E}_{(\mathbf{x}, y) \sim \mathcal{D}^*}
(\mathds{1}(f(\mathbf{x}) \neq y)  \\
                 & =  Pr(f_{nat}(\mathbf{x}) \neq y)\\
                 & = Pr(y = -1 ) \cdot Pr(f_{nat}(\mathbf{x}) = +1 | y = -1) + Pr(y = +1 ) \cdot Pr(f_{nat}(\mathbf{x}) = -1 | y = +1)\\
                 & = Pr(y = -1 ) \cdot \mathbf{R}_{nat}^-(f_{nat}) + Pr(y = +1 ) \cdot \mathbf{R}_{nat}^+(f_{nat})
\end{split}
\end{equation}
$\mathbf{R}_{nat}^-(f_{nat})$ and $\mathbf{R}_{nat}^+(f_{nat})$ represent the class-wise risk of misclassifying -1 and +1 respectively, $f_{nat}$ is a naturally trained classifier.

\begin{theorem}[\citep{xu2021robust}]  
Given a Gaussian distribution $\mathcal{D}^{*}$, a naturally trained classifier $f_{nat}$ which minimizes the expected natural risk:
$f_{nat}(\mathbf{x}) = \argmin_{\bm{f}} \mathds{E}_{(\mathbf{x}, y) \sim \mathcal{D}^*} (\mathds{1}(f(\mathbf{x}) \neq y)$. It has the class-wise natural risk:
\[ \mathbf{R}^-_{nat}(f_{nat}) = Pr \{\mathcal{N}(0, 1) \leq A - K\cdot \sqrt{A^2 + q(K)}\}\]
\[ \mathbf{R}^+_{nat}(f_{nat}) = Pr \{\mathcal{N}(0, 1) \leq  - K\cdot A 
 + \sqrt{A^2 + q(K)}\}\]
 where $A = \frac{2}{K^2 - 1}\frac{\sqrt{d}\eta}{\sigma}$ and $q(K) = \frac{2logK}{K^2 - 1}$ which is a positive constant and depends only on $K$. 
 Therefore, class +1 has a larger risk: $\mathbf{R}_{nat}^-(f_{nat}) < \mathbf{R}^+(f_{nat})$. 
\end{theorem}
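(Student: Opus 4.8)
The plan is to reduce the $d$-dimensional problem to a one-dimensional threshold problem, solve explicitly for the risk-minimizing threshold, and read off the two class-wise risks. Throughout, write $\sigma := \sigma_-$ so that $\sigma_+ = K\sigma$, and set $m := \sqrt{d}\,\eta = \|\mu\|$ (assuming $\eta>0$). First I would show the optimal linear classifier has weight vector parallel to $\mu$. For a unit vector $\omega$, isotropy of the covariance makes $\langle\omega,\mathbf{x}\rangle$ distributed as $\mathcal{N}(\langle\omega,\mu\rangle,\sigma_+^2)$ under $y=+1$ and $\mathcal{N}(-\langle\omega,\mu\rangle,\sigma_-^2)$ under $y=-1$, so the natural risk depends on $\omega$ only through the signal $a=\langle\omega,\mu\rangle$. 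Minimizing over the bias $b$ yields a risk $r(a)$ that is strictly decreasing for $a>0$: after substituting $b=a\beta$, each of the two Gaussian tail terms is a decreasing function of $a$ for fixed $\beta\in(-1,1)$, and the lower envelope of decreasing functions is decreasing. Since $a\le\|\mu\|=m$ with equality iff $\omega=\mu/\|\mu\|$, the optimum is $\omega\propto\mu$, which reduces the problem to choosing a threshold $c$ on $t:=\langle\mu/\|\mu\|,\mathbf{x}\rangle$, predicting $+1$ iff $t\ge c$.

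Next I would write the risk as a function of $c$, with $\Phi,\phi$ the standard normal CDF and PDF:
\begin{equation*}
R_{nat}(c)=\tfrac12\,\mathbf{R}^+_{nat}(c)+\tfrac12\,\mathbf{R}^-_{nat}(c),\qquad \mathbf{R}^+_{nat}(c)=\Phi\!\left(\tfrac{c-m}{K\sigma}\right),\quad \mathbf{R}^-_{nat}(c)=\Phi\!\left(\tfrac{-c-m}{\sigma}\right).
\end{equation*}
Setting $R_{nat}'(c)=0$ gives $\tfrac1K\phi\!\left(\tfrac{c-m}{K\sigma}\right)=\phi\!\left(\tfrac{c+m}{\sigma}\right)$; taking logarithms turns this into a quadratic in $c$ whose leading coefficient is positive because $K>1$. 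Solving it, the discriminant simplifies — once $A=\tfrac{2}{K^2-1}\tfrac{\sqrt d\,\eta}{\sigma}$ and $q(K)=\tfrac{2\log K}{K^2-1}$ are substituted — to $K^2\sigma^2\bigl(A^2+q(K)\bigr)$, yielding the two roots $c=-\tfrac{A\sigma(K^2+1)}{2}\pm K\sigma\sqrt{A^2+q(K)}$. Plugging the $+$ root into the two formulas above and using $m=\tfrac{A\sigma(K^2-1)}{2}$ collapses $\tfrac{-c-m}{\sigma}$ to $A-K\sqrt{A^2+q(K)}$ and $\tfrac{c-m}{K\sigma}$ to $\sqrt{A^2+q(K)}-KA$, giving exactly the claimed expressions. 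As a consistency check, the first-order condition $u_-^2-u_+^2=2\log K$ with $u_-=A-K\sqrt{A^2+q}$ and $u_+=\sqrt{A^2+q}-KA$ reduces, by the difference-of-squares factorization, to $(K^2-1)q(K)=2\log K$, which is precisely $q(K)$'s definition.

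Then I would confirm the $+$ root is the global minimizer, not just a critical point: $R_{nat}(c)\to\tfrac12$ as $c\to\pm\infty$, and since the log of the ratio $\tfrac1K\phi(\cdot)/\phi(\cdot)$ is an upward parabola in $c$, $R_{nat}'$ is positive outside the two roots and negative between them, so $R_{nat}$ is increasing–decreasing–increasing and attains its global minimum at the larger (the $+$) root. Finally, since $\Phi$ is increasing, $\mathbf{R}^-_{nat}(f_{nat})<\mathbf{R}^+_{nat}(f_{nat})$ is equivalent to $A-K\sqrt{A^2+q(K)}<\sqrt{A^2+q(K)}-KA$, i.e.\ $(1+K)A<(1+K)\sqrt{A^2+q(K)}$, i.e.\ $A<\sqrt{A^2+q(K)}$, i.e.\ $q(K)>0$, which holds because $K>1$. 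The main obstacle I expect is the bookkeeping in the middle step: carrying the algebra of the quadratic through cleanly, recognizing its discriminant as $K^2\sigma^2(A^2+q(K))$, and pinning down which root is the relevant one — together with the short but genuinely necessary argument (via the boundary value $\tfrac12$ and the sign pattern of $R_{nat}'$) that this critical point, rather than a degenerate threshold at $\pm\infty$, realizes the minimum.
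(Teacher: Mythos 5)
The paper does not actually prove this statement: Theorem 1 is imported verbatim from \citep{xu2021robust} and used only as a black box to derive Corollary 1, so there is no in-paper proof to compare against. Your derivation is, to my reading, a correct reconstruction of the result. The reduction to a one-dimensional threshold problem via isotropy, the first-order condition $\tfrac1K\phi\bigl(\tfrac{c-m}{K\sigma}\bigr)=\phi\bigl(\tfrac{c+m}{\sigma}\bigr)$, the quadratic in $c$ with discriminant $4K^2\sigma^2\bigl(A^2+q(K)\bigr)$, the identification of the larger root as the global minimizer via the sign pattern of $R_{nat}'$ and the boundary value $\tfrac12$, and the substitution giving arguments $A-K\sqrt{A^2+q(K)}$ and $\sqrt{A^2+q(K)}-KA$ all check out, as does the final inequality reducing to $q(K)>0$. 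One presentational caveat: both you and the paper implicitly take the $\argmin$ over \emph{linear} classifiers (consistent with Eq.~\ref{linear-classifier}); the unconstrained Bayes classifier for unequal variances is quadratic, so the restriction should be stated explicitly.

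The one genuine soft spot is the direction-optimality step. You reparametrize the bias as $b=a\beta$ and argue that for fixed $\beta\in(-1,1)$ both tail terms decrease in $a$, so the lower envelope decreases. But the optimal threshold need not lie in $(-a,a)$: when the signal is weak relative to $q(K)$ (specifically when $(K^2-1)A^2<q(K)$), one finds $c^*>a$, so the envelope over $\beta\in(-1,1)$ does not obviously contain the optimum and the argument as written has a gap. The fix is immediate and actually simpler than what you wrote: for a \emph{fixed, unscaled} threshold $c$, both $\Phi\bigl(\tfrac{c-a}{K\sigma}\bigr)$ and $\Phi\bigl(\tfrac{-c-a}{\sigma}\bigr)$ are decreasing in $a$, so $\min_c R(a,c)$ is non-increasing in $a$ over all of $[-m,m]$ with no restriction on where the minimizer sits, and the optimum is attained at $a=m$, i.e.\ $\omega\propto\mu$. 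With that one-line repair your proof is complete.
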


Theorem 1 shows that the vulnerable class +1 (with larger variance) is more difficult to classify than -1, because the an optimal $f_{nat}$ has a higher standard error for +1 than -1.

\begin{corollary}[Vulnerability of a data sample.] The vulnerability of an example may be estimated using a classifier's estimated  class-probability. 
\end{corollary}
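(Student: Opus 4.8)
The statement to make precise is that, inside the Gaussian mixture model defined above, (i) for a single example the estimated true-class probability $f_\theta(\mathbf{x})_y$ is a monotone proxy for the geometric vulnerability (closeness to the decision boundary) used informally in Section~\ref{sec:w1}, and (ii) aggregated over a class, the class that Theorem 1 certifies as harder --- the high-variance class $+1$ --- is exactly the one whose examples $f_\theta(\mathbf{x})_y$ flags as more vulnerable in the sense of Definition 4.1. I would first fix a \emph{soft} companion of the (linear) classifier $f_{nat}$ of Theorem 1: set $f_\theta(\mathbf{x})_{+1} = \ell\!\big(a(\langle\omega,\mathbf{x}\rangle + b)\big)$ and $f_\theta(\mathbf{x})_{-1} = 1 - f_\theta(\mathbf{x})_{+1}$, where $\ell$ is the logistic function and $a>0$. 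This is the canonical probabilistic reading of $f_{nat}$: the hard predictions agree, $F_\theta(\mathbf{x}) = f_{nat}(\mathbf{x})$, and $f_\theta(\mathbf{x})_c > \tfrac12 \iff f_{nat}(\mathbf{x}) = c$.

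For the per-example claim, fix a label $y$; then $f_\theta(\mathbf{x})_y = \ell\!\big(a\,\sign(y)(\langle\omega,\mathbf{x}\rangle+b)\big)$ is a strictly increasing function of the signed margin $\sign(y)(\langle\omega,\mathbf{x}\rangle+b)/\|\omega\|$, i.e.\ of the signed distance of $\mathbf{x}$ to the decision hyperplane of $f_{nat}$. Hence, among examples sharing a class, the ranking induced by Definition 4.1 (smaller $f_\theta(\mathbf{x})_y \Rightarrow$ more vulnerable) is \emph{identical} to ``closer to, or further across, the boundary,'' which is the geometric notion of vulnerability invoked from \citet{xu2021robust,zhang2020geometry}. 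That already yields the corollary at the single-example level.

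For the class-level claim, I would pass to the one-dimensional sufficient statistic $t=\langle\omega,\mathbf{x}\rangle+b$. By the coordinate-permutation symmetry of the model one may take $\omega\propto\mu$, so conditioning on $y=c$ gives $t\mid y=c\sim\mathcal{N}\big(\sign(c)\,m_c,\, s_c^2\big)$ with $m_c>0$ and $s_c\propto\sigma_c$, whence $s_+>s_-$; moreover $Pr_{\mathbf{x}\mid y=c}\!\big(f_\theta(\mathbf{x})_c\le\tfrac12\big)=\mathbf{R}^c_{nat}=\Phi(-m_c/s_c)$. Thus Theorem 1 is exactly the inequality $m_+/s_+ < m_-/s_-$, and combining the two displays gives $Pr_{\mathbf{x}\mid y=+1}\!\big(f_\theta(\mathbf{x})_{+1}\le\tfrac12\big) > Pr_{\mathbf{x}\mid y=-1}\!\big(f_\theta(\mathbf{x})_{-1}\le\tfrac12\big)$ --- strictly more vulnerable-class examples are flagged by $f_\theta$ as at least as vulnerable as a boundary point. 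I would then strengthen this over the whole genuinely ``vulnerable'' range of confidences: for $u\le\tfrac12$ the relevant threshold is $\tau=\ell^{-1}(u)/a\le 0$, and a one-line comparison of two normal CDFs using $s_+>s_-$, $\tau\le 0$, and $m_+/s_+<m_-/s_-$ yields $Pr_{\mathbf{x}\mid y=+1}\!\big(f_\theta(\mathbf{x})_{+1}\le u\big)\ge Pr_{\mathbf{x}\mid y=-1}\!\big(f_\theta(\mathbf{x})_{-1}\le u\big)$ for all $u\in[0,\tfrac12]$.

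The delicate point is not the inequality but choosing the model of ``estimated class probability'' so that the reduction is honest: the soft classifier must be consistent with the \emph{linear} $f_{nat}$, because the true Bayes posterior for unequal variances has a quadric level set $\{p=\tfrac12\}$, which would break the identity $\mathbf{R}^c_{nat}=Pr(f_\theta(\mathbf{x})_c\le\tfrac12)$ that drives the argument; the $y=-1$ sign bookkeeping also needs care. It is also worth stating in the proof \emph{why} the comparison is confined to $u\le\tfrac12$: the high-variance class simultaneously carries more mass in the high-confidence tail, so neither the full confidence distributions nor their means $\mathbb{E}[f_\theta(\mathbf{x})_y]$ can be ordered --- the estimated probability is a valid vulnerability estimator precisely in the low-confidence regime, which is the only regime the notion of vulnerability concerns.
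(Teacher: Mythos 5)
Your proposal is correct, but it takes a substantially different --- and considerably more ambitious --- route than the paper. The paper's own proof is a two-line complementation at the \emph{class} level: it defines $\mathbf{P}^{c}_{nat}(f_{nat})=Pr(f_{nat}(\mathbf{x})=c\mid y=c)=1-\mathbf{R}^{c}_{nat}(f_{nat})$ and reads the conclusion $\mathbf{P}^{-}_{nat}>\mathbf{P}^{+}_{nat}$ directly off Theorem~1's risk ordering, implicitly identifying ``estimated class-probability'' with the class-conditional accuracy of the hard classifier. (Incidentally, the paper's displayed conditioning events for $\mathbf{R}^{\pm}_{nat}$ are swapped relative to its own Eq.~(15); your bookkeeping is the consistent one.) You instead build an explicit soft companion $\ell(a(\langle\omega,\mathbf{x}\rangle+b))$ of $f_{nat}$, prove the \emph{per-example} statement that $f_\theta(\mathbf{x})_y$ is a strictly monotone function of the signed margin --- which is what actually connects Definition~4.1 to the geometric notion of vulnerability the corollary is invoked to support --- and then upgrade the class-level claim to a tail-dominance inequality $Pr_{\mathbf{x}\mid y=+1}(f_\theta(\mathbf{x})_{+1}\le u)\ge Pr_{\mathbf{x}\mid y=-1}(f_\theta(\mathbf{x})_{-1}\le u)$ for all $u\le\tfrac12$, recovering the paper's statement at $u=\tfrac12$. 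The CDF comparison is right: writing the two probabilities as $\Phi((\tau-m_+)/s_+)$ and $\Phi((\tau-m_-)/s_-)$, the inequality follows from $\tau\le 0$, $s_+>s_-$, and $m_+/s_+<m_-/s_-$, the last being exactly Theorem~1. What your approach buys is a proof that actually speaks about the quantity $f_\theta(\mathbf{x})_y$ appearing in Definition~4.1 rather than about 0--1 accuracies, plus two honest caveats the paper omits: the soft model must be linear-consistent with $f_{nat}$ (the Bayes posterior for unequal variances has a quadric $\{p=\tfrac12\}$ set, which would break the identity $\mathbf{R}^{c}_{nat}=Pr(f_\theta(\mathbf{x})_c\le\tfrac12)$), and the dominance genuinely fails for $u>\tfrac12$ because the high-variance class also has more mass in the high-confidence tail, so the estimator is only meaningful in the low-confidence regime. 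What it costs is the extra modelling choice (the logistic link and slope $a$, and the mild assumption that both class means are correctly classified), none of which is needed for the paper's weaker class-level reading.
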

\begin{proof}
The class-wise risks corresponding to classes -1 and +1 can be respectively written as:
\[ \mathbf{R}^-_{nat}(f_{nat}) = Pr(f_{nat}(\mathbf{x}) = -1 | y = +1) = Pr(\langle \omega, \mathbf{x}\rangle +  b > 0)\]
\[ \mathbf{R}^+_{nat}(f_{nat}) = Pr(f_{nat}(\mathbf{x}) = +1 | y = -1) = Pr(\langle \omega, \mathbf{x}\rangle +  b < 0)\]

Let $\mathbf{P}^-_{nat}(f_{nat})$ and $\mathbf{P}^+_{nat}(f_{nat})$ respectively denote the correct probability estimates of classes -1 and +1. Then, 
\[\mathbf{P}^-_{nat}(f_{nat}) = Pr(f_{nat}(\mathbf{x}) = -1 | y = -1)\]
\[\mathbf{P}^+_{nat}(f_{nat}) = Pr(f_{nat}(\mathbf{x}) = +1 | y = +1). \]
From Theorem 1, $\mathbf{R}^-_{nat}(f_{nat})$  \textbf{<}   $\mathbf{R}^+_{nat}(f_{nat}).$ It follows that $\mathbf{P}^-_{nat}(f_{nat})$  \textbf{>}   $\mathbf{P}^+_{nat}(f_{nat})$.



\end{proof}

\section{Additional Experiments on Data Augmentation Based Defense\label{app-B}}
We perform additional experiments on the data augmentation-based defense using extra datasets generated using the Elucidating Diffusion Model (EDM) 
following \cite{wang2023better}.

We utilize Wideresnet-28-10 as the backbone model for the experiments. The model was trained using 20M EDM-generated data for 400 epochs with the training batch size of 512. Common augmentation described in \citep{wang2023better} is applied on the training samples. We employ the
SGD optimizer with Nesterov momentum \citep{nesterov1983method} 
where  the momentum factor and weight decay were set to
0.9 and $5e^{-4}$ respectively. We use the cyclic learning
rate schedule with cosine annealing \citep{smith2019super},where the initial learning rate is set to 0.2. For VIR-TRADES, we used the settings described in section 5.3. 

The obtained results are reported in Table 8. The experimental results show improvement of VIR-TRADES over TRADES and VIR-AT over AT on CIFAR-10


\begin{table}[!h]
\caption{Comparing  robustness (accuracy \%) for Wideresnet-28-10 on CIFAR-10 trained on 20M EDM-generated data.}
\label{table:extra-data-training}
\vskip 0.15in
\begin{center}
\begin{small}
\begin{sc}
\begin{tabular}{lccccc}
\hline
\hline
Defense &  Natural & PGD-100 & CW& AA\\
\hline
 TRADES      & 90.33\tiny{$\pm$0.14} & 65.74\tiny{$\pm$0.16} &64.01\tiny{$\pm$0.17}&63.03\tiny{$\pm$0.13}& \\
 {AT}        & 91.56\tiny{$\pm$0.09} & 66.23\tiny{$\pm$0.12} &65.62\tiny{$\pm$0.09}&{62.42}\tiny{$\pm$0.11}& \\

 \hline

   \textbf{VIR-TRADES}        & 89.73\tiny{$\pm$0.12} & 67.23\tiny{$\pm$0.08} &65.47\tiny{$\pm$0.05}&\textbf{{64.31}}\tiny{$\pm$0.17}& \\
   
 \textbf{VIR-AT}        & \textbf{92.69 \tiny{$\pm$0.25}} & \textbf{67.98\tiny{$\pm$0.25}} &\textbf{66.95}\tiny{$\pm$0.17}&62.95\tiny{$\pm$0.09}& \\

 \hline
 \hline
\end{tabular}
\end{sc}
\end{small}
\end{center}
\vskip -0.1in
\end{table}

\section{Ablation Studies on Hyperparameters\label{app-C}}


We show in the following experimental results on varying hyperparameter $\alpha, \gamma, \beta$ values. The results are obtained from training ResNet-18 on CIFAR-10 dataset.

\begin{table*}[h]
\caption{Ablation studies on \textit{VIR-AT} showing the impact of the $\beta$ hyperparameter on the performance of the proposed reweighting function. }
\label{table:ablation3}
\setlength{\tabcolsep}{3pt}
\vskip 0.05in
\begin{center}
\begin{small}
\begin{sc}
\begin{tabular}{lccccc}
\hline
\hline
Reweighting Function  &  Natural & PGD-100 & CW & AA & \\
\hline
 $ S_{v}(x_i, y_i)\cdot S_d(x_i,x'_i) + (\beta = 0)$       & 84.48\tiny{$\pm$0.11} & 57.20\tiny{$\pm$0.14} & 51.25\tiny{$\pm$0.12}&47.32\tiny{$\pm$0.11}&\\
  
  $ S_{v}(x_i, y_i)\cdot S_d(x_i,x'_i) + (\beta = \textbf{0.007})$  &84.59\tiny{$\pm$0.18} & \textbf{56.42\tiny{$\pm$0.18}} & 52.18\tiny{$\pm$0.15}& \textbf{48.21\tiny{$\pm$0.08}} &\\

$ S_{v}(x_i, y_i)\cdot S_d(x_i,x'_i) + (\beta = 0.1)$     & 84.26\tiny{$\pm$ 0.07} & 53.52\tiny{$\pm$0.13}  &\textbf{52.20\tiny{$\pm$0.19}} &48.17\tiny{$\pm$0.13}&\\

 \hline
 \hline
\end{tabular}
\end{sc}
\end{small}
\end{center}
\vskip -0.1in
\end{table*}

\begin{table*}[h]
\caption{Ablation studies on \textit{VIR-TRADES} showing the impact of the $\beta$ hyperparameter on the performance of the proposed reweighting function. }
\label{table:ablation4}
\setlength{\tabcolsep}{3pt}
\vskip 0.05in
\begin{center}
\begin{small}
\begin{sc}
\begin{tabular}{lccccc}
\hline
\hline
Reweighting Function  &  Natural & PGD-100 & CW & AA & \\
\hline
 $ S_{v}(x_i, y_i)\cdot S_d(x_i,x'_i) + (\beta = 0.5)$       & 83.32\tiny{$\pm$0.18} & 53.70\tiny{$\pm$0.13} & 51.60\tiny{$\pm$0.12}&49.29\tiny{$\pm$0.09}&\\
  
  $ S_{v}(x_i, y_i)\cdot S_d(x_i,x'_i) + (\beta = \textbf{1.6})$  &82.03\tiny{$\pm$0.13} & \textbf{54.86\tiny{$\pm$0.17}} & \textbf{53.11\tiny{$\pm$0.17}}& \textbf{51.03\tiny{$\pm$0.16}} &\\

$ S_{v}(x_i, y_i)\cdot S_d(x_i,x'_i) + (\beta = 2.0)$     & 80.86\tiny{$\pm$0.10} & 54.58\tiny{$\pm$0.15}  &52.62\tiny{$\pm$0.08} &50.56\tiny{$\pm$0.13}&\\

 \hline
 \hline
\end{tabular}
\end{sc}
\end{small}
\end{center}
\vskip -0.1in
\end{table*}

\begin{table*}[h]
\caption{Ablation studies on \textit{VIR-AT} showing the impact of the $\alpha$ and $\gamma$ hyperparameters on the performance of the proposed reweighting function. } 
\label{table:ablation5}
\setlength{\tabcolsep}{3pt}
\vskip 0.05in
\begin{center}
\begin{small}
\vspace{2.5em}
\begin{sc}
\begin{tabular}{lccccccc}
\hline
\hline
$\alpha$& $\gamma$ & $\beta$& Natural & PGD-100 & CW & AA&\\
\hline

  1  &10 & 0.007 & 83.75\tiny{$\pm$0.11}& 54.38\tiny{$\pm$0.08} & 52.20\tiny{$\pm$0.15}&{48.25\tiny{$\pm$0.15}}\\

   2 &10 & 0.007 & {83.80}\tiny{$\pm$0.09}& 55.28\tiny{$\pm$0.05} & 52.13\tiny{$\pm$0.15} &48.18\tiny{$\pm$0.16}\\

     3& 10 & 0.007 & 84.16\tiny{$\pm$0.06}&55.85\tiny{$\pm$0.10}&52.05\tiny{$\pm$0.11}&48.15\tiny{$\pm$0.11}\\

     5& 10 & 0.007 & 84.31\tiny{$\pm$0.06}&55.92\tiny{$\pm$0.10}&51.80\tiny{$\pm$0.09}&47.85\tiny{$\pm$0.09}\\

     \textbf{7}& \textbf{10} & \textbf{0.007} & 84.59\tiny{$\pm$0.18}&56.42\tiny{$\pm$0.18}&52.20\tiny{$\pm$0.19}&48.17\tiny{$\pm$0.13}\\
      8& 10 & 0.007 & 84.19\tiny{$\pm$0.07}&56.68\tiny{$\pm$0.12}&52.01\tiny{$\pm$0.09}&48.02\tiny{$\pm$0.10}\\


      7& 2 & 0.007 & 83.96\tiny{$\pm$0.15}&54.75\tiny{$\pm$0.13}&51.90\tiny{$\pm$0.08}&47.89\tiny{$\pm$0.05}\\

      7& 4 & 0.007 & 84.52\tiny{$\pm$0.08}&54.92\tiny{$\pm$0.10}&52.01\tiny{$\pm$0.10}&47.97\tiny{$\pm$0.06}\\

      7& 6 & 0.007 & 84.36\tiny{$\pm$0.08}&55.82\tiny{$\pm$0.11}&51.76\tiny{$\pm$0.08}&47.70\tiny{$\pm$0.08}\\

       7& 8 & 0.007 & 84.15\tiny{$\pm$0.08}&56.30\tiny{$\pm$0.11}&51.91\tiny{$\pm$0.12}&47.87\tiny{$\pm$0.07}\\


 \hline
 \hline
\end{tabular}
\end{sc}
\end{small}
\end{center}
\vskip -0.1in
\end{table*}

\begin{table*}[h]
\caption{Ablation studies on \textit{VIR-TRADES} showing the impact of the $\alpha$ and $\gamma$ hyperparameters on the performance of the proposed reweighting function. } 
\label{table:ablation6}
\setlength{\tabcolsep}{3pt}
\vskip 0.05in
\begin{center}
\begin{small}
\begin{sc}
\begin{tabular}{lccccccc}
\hline
\hline
$\alpha$& $\gamma$ & $\beta$& Natural & PGD-100 & CW & AA&\\
\hline

      1& 3.0 & 1.6&82.10\tiny{$\pm$0.15}&54.07\tiny{$\pm$0.08}&52.29\tiny{$\pm$0.09}&50.08\tiny{$\pm$0.12}\\
  
   2&3.0 & 1.6 & 82.13\tiny{$\pm$13}& 54.10\tiny{$\pm$0.05} & 52.34\tiny{$\pm$0.09}&50.13\tiny{$\pm$0.08}\\

   3  &3.0 & 1.6 & 82.12\tiny{$\pm$0.11}& 54.16\tiny{$\pm$0.10} & 52.51\tiny{$\pm$0.10}&50.41\tiny{$\pm$0.07}\\
   5  &3.0 & 1.6 & 82.06\tiny{$\pm$0.11}& 54.31\tiny{$\pm$0.08} & 52.54\tiny{$\pm$0.13}&50.43\tiny{$\pm$0.05}\\
    7  &3.0 & 1.6 & 82.05\tiny{$\pm$0.10}& 54.52\tiny{$\pm$0.09} & 52.59\tiny{$\pm$0.11}&50.52\tiny{$\pm$0.05}\\
     \textbf{8}  &\textbf{3.0} & \textbf{1.6} & 82.03\tiny{$\pm$0.13}& 54.86\tiny{$\pm$0.17} & 53.11\tiny{$\pm$0.17}&51.03\tiny{$\pm$0.16}\\
      8  &1.0 & 1.6 & 81.05\tiny{$\pm$0.10}& 53.96\tiny{$\pm$0.13} & 51.85\tiny{$\pm$0.06}&49.89\tiny{$\pm$0.09}\\
       8  &2.0 & 1.6 & 81.09\tiny{$\pm$0.12}& 54.26\tiny{$\pm$0.15} & 52.41\tiny{$\pm$0.08}&50.28\tiny{$\pm$0.17}\\
         8  &4.0 & 1.6 & 81.91\tiny{$\pm$0.09}& 54.42\tiny{$\pm$0.13} & 52.70\tiny{$\pm$0.07}&50.34\tiny{$\pm$0.11}\\

 \hline
 \hline
\end{tabular}
\end{sc}
\end{small}
\end{center}
\vskip -0.1in
\end{table*}

\end{document}